\relax
\documentclass[letterpaper]{article} 
\usepackage{aaai19}  
\usepackage{times}  
\usepackage{helvet}  
\usepackage{courier}  
\usepackage{url}  
\usepackage{graphicx}  

\usepackage[utf8]{inputenc} 
\usepackage[T1]{fontenc}    
\usepackage{booktabs}       
\usepackage{amsfonts}       
\usepackage{nicefrac}       
\usepackage{microtype}      
\usepackage{subcaption}
\usepackage[colorinlistoftodos]{todonotes}
\usepackage{tabularx}
\usepackage{array}
\usepackage[para]{footmisc}
\usepackage{amssymb,amsthm,amsmath}
\usepackage{cleveref}
\frenchspacing  
\setlength{\pdfpagewidth}{8.5in}  
\setlength{\pdfpageheight}{11in}  

\DeclareMathOperator{\trace}{trace}
\DeclareMathOperator{\rank}{rank}

\newtheorem{theorem}{Theorem}

\newtheorem{proposition}{Proposition}
\newtheorem{corollary}{Corollary}
\newtheorem{remark}{Remark}

%
%
%
\pdfinfo{ /Title (Convex Formulations for Fair Principal Component Analysis)
	/Author (Matt Olfat \& Anil Aswani)
	/Keywords (Fairness, PCA, Dimensionality Reduction, Clustering, Semi-definite Programming) }
%
\setcounter{secnumdepth}{2}

\title{Convex Formulations for Fair Principal Component Analysis}
\author{Matt Olfat \; Anil Aswani\\
	UC Berkeley\\
	Berkeley, CA 94720\\
}

\begin{document}
	\maketitle

	\begin{abstract}
	Though there is a growing literature on fairness for supervised learning, incorporating fairness into unsupervised learning has been less well-studied.  This paper studies fairness in the context of principal component analysis (PCA).  We first define fairness for dimensionality reduction, and our definition can be interpreted as saying a reduction is fair if information about a protected class (e.g., race or gender) cannot be inferred from the dimensionality-reduced data points.  Next, we develop convex optimization formulations that can improve the fairness (with respect to our definition) of PCA and kernel PCA.  These formulations are semidefinite programs, and we demonstrate their effectiveness using several datasets.  We conclude by showing how our approach can be used to perform a fair (with respect to age) clustering of health data that may be used to set health insurance rates.
	
	\end{abstract}
	
	\section{Introduction}
	\label{sec:intro}
	
	
	Despite the success of machine learning in informing policies and automating decision-making, there is growing concern about the fairness (with respect to protected classes like race or gender) of the resulting policies and decisions \cite{miller2015can,rudin2013predictive,angwin2016machine,munoz2016big}.  Hence, several groups have studied how to define fairness for supervised learning \cite{hardt2016equality,calders2009building,dwork2012fairness,zliobaite2015relation} and developed supervised learners that maintain high prediction accuracy while reducing unfairness \cite{berk2017fairness,chouldechova2017fair,hardt2016equality,zafar2017,olfat2017spectral}.
	
	
	However, fairness in the context of unsupervised learning has not been well-studied to date.  One reason is that fairness is easier to define in the supervised setting, where positive predictions can often be mapped to positive decisions (e.g., an individual who is predicted to not default on a loan maps to the individual being offered a loan).  Such notions of fairness cannot be used for unsupervised learning, which does not involve making predictions. A second reason is that it is not obvious why fairness is an issue of relevance to unsupervised learning, since predictions are not made.
	
	\subsection{Relevance of fairness to unsupervised learning}
	
	Fairness is important to unsupervised learning: first, unsupervised learning is often used to generate qualitative insights from data.  Examples include visualizing high-dimensional data through dimensionality-reduction and clustering data to identify common trends or behaviors.  If such qualitative insights are used to generate policies, then there is an opportunity to introduce unfairness in the resulting policies if the results of the unsupervised learning are unequal for different protected classes (e.g., race or gender).  We present such an example in Section \ref{sec:experiment} using individual health data.
	
	Second, unsupervised learning is often used as a preprocessing step for other learning methods.  For instance, dimensionality reduction is sometimes performed prior to clustering, and hence fair dimensionality reduction could indirectly provide methods for fair clustering.  Similarly, there are no fairness-enhancing versions of most supervised learners.  Consequently, techniques for fair unsupervised learning could  be combined with state-of-the-art supervised learners to develop new fair supervised learners.  In fact, the past work most related to this paper concerns techniques that have been developed to generate fair data transformations that maintaining high prediction accuracy for classifiers that make predictions using the transformed data \protect\cite{dwork2012fairness,zemel2013learning,feldman2015certifying}; however, these past works are most accurately classified as supervised learning because the data transformations are computed with respect to a label used for predictions.
	
	We briefly review this work.  \citeauthor{dwork2012fairness} \shortcite{dwork2012fairness} propose a linear program that maps individuals to probability distributions over possible classifications such that similar individuals are classified similarly. \citeauthor{zemel2013learning} \shortcite{zemel2013learning} and \citeauthor{calmon2017optimized} \shortcite{calmon2017optimized} generate an intermediate representation for fair clustering using a non-convex formulation that is difficult to solve. \citeauthor{feldman2015certifying} \shortcite{feldman2015certifying} propose an algorithm that scales data points such that the distributions of features, conditioned on the protected attribute, are matched; however, this approach makes the restrictive assumption that predictions are monotonic with respect to each dimension. \citeauthor{chierichetti2017fair} \shortcite{chierichetti2017fair} directly perform fair clustering by approximating an NP-hard preprocessing step; however, this approach only applies to specific clustering techniques whereas the approach we develop can be used with arbitrary clustering techniques.  Finally, a series of work has emerged using auto-encoders in the the context of deep classification. This area is promising, but suffers from a lack of theoretical guarantees and is further oriented almost entirely around an explicit classification task \cite{beutel2017data,zhang2018mitigating}. In contrast, our method has applications in both supervised and unsupervised learning tasks, and well-defined convergence and optimality guarantees.
	
	\subsection{Outline and novel contributions}
	
	This paper studies fairness for principal component analysis (PCA), and we make three main contributions: First, in Section \ref{sec:notion} we propose and motivate a novel quantitative definition of fairness for dimensionality reduction. Second, in Section \ref{sec:formulation} we develop convex optimization formulations for fair PCA and fair kernel PCA. Third, in Section \ref{sec:experiment} we demonstrate the efficacy of our semidefinite programming (SDP) formulations using several datasets, including using fair PCA as preprocessing to perform fair (with respect to age) clustering of health data that can impact health insurance rates.
	
	\section{Notation}
	\label{sec:notation}

	Let $[n] = \{1,\ldots,n\}$, $\mathbf{1}(u)$ be the Heaviside function, and let $\mathbf{e}$ be the vector whose entries are all 1.  A positive semidefinite matrix $U$ with dimensions $q\times q$ is denoted $U\in\mathbb{S}^q_+$ (or $U \succeq 0$ when dimensions are clear).  We use the notation $\langle\cdot,\cdot\rangle$ to denote the inner product and $\mathbb{I}$ the identity matrix.
	
	Our data consists of 2-tuples $(x_i,z_i)$ for $i=1,\ldots,n$, where $x_i\in\mathbb{R}^p$ are a set of features, and $z_i\in\{-1,1\}$ label a protected class. For a matrix $W$, the $i$-th row of $W$ is denoted $W_i$. Let $X\in\mathbb{R}^{n\times p}$ and $Z\in\mathbb{R}^n$ be the matrices so that $X_i = (x_i - \overline{x})^\textsf{T}$ and $Z_i = z_i$, where $\overline{x} = \frac{1}{n}\sum_i x_i$.  Also, we use the notation $\Pi : \mathbb{R}^p\rightarrow\mathbb{R}^d$ to refer to a function that performs dimensionality reduction on the $x_i$ data, where $d$ is the dimension of the dimensionality-reduced data.
	
	Let $P = \{i : z_i = +1\}$ be the set of indices where the protected class is positive, and similarly let $N = \{i : z_i=-1\}$ be the set of indices where the protected class is negative.
	We use $\#P$ and $\#N$ for the cardinality of these sets. Furthermore, we define $X_+$ to be the matrix whose rows are $x_i^\textsf{T}$ for $i\in P$, and we similarly define $X_-$ to be the matrix whose rows are $x_i^\textsf{T}$ for $i\in N$. Next, let $\widehat{\Sigma}_+$ and $\widehat{\Sigma}_-$ be the sample covariances matrices of $X_+$ and $X_-$, respectively.
	
	For a kernel function $k:\mathbb{R}^p\times\mathbb{R}^p\rightarrow\mathbb{R}_+$, let $K(X,X') = [ k(X_i^{\vphantom{'}},X_j')]_{ij}$
	be the transformed Gram matrix.  Since the \emph{kernel trick} involves replacing $x_i^\textsf{T}x_j$ with $K(x_i,x_j)$, the benefit of the above notation is it allows us to replace $X(X')^\textsf{T}$ with $K(X,X')$ as part of applying the kernel trick.
	
	\section{Fairness for dimensionality reduction}
	\label{sec:notion}

	Definitions of fairness for supervised learning \cite{hardt2016equality,dwork2012fairness,calders2009building,zliobaite2015relation,feldman2015certifying,chouldechova2017fair,berk2017fairness} specify that predictions conditioned on the protected class are roughly equivalent.  However, these fairness notions cannot be used for dimensionality reduction because predictions are not made in unsupervised learning.  This section discusses fairness for dimensionality reduction.  We first provide and motivate a general quantitative definition of fairness, and then present several important cases of this definition.  
	
	\subsection{General definition}
	
	Consider a fixed classifier $h(u,t) : \mathbb{R}^d\times\mathbb{R}\rightarrow \{-1,+1\}$ that inputs features $u\in\mathbb{R}^d$ and a threshold $t$, and predicts the protected class $z\in\{-1,+1\}$.  We say that a dimensionality reduction $\Pi : \mathbb{R}^p\rightarrow\mathbb{R}^d$ is $\Delta(h)$-fair if
	\begin{equation}
	\label{eq:fairness}
	\begin{aligned}
	&\Big|\mathbb{P}\big[h(\Pi(x), t) = +1 \big| z = +1\big]\\
	&-\mathbb{P}\big[h(\Pi(x), t) = +1 \big| z = -1\big]\Big| \leq \Delta(h), \ \forall t\in\mathbb{R}.
	\end{aligned}
	\end{equation}
	Moreover, let $\mathcal{F}$ be a family of classifiers.  Then we say that a dimensionality reduction $\Pi : \mathbb{R}^p\rightarrow\mathbb{R}^d$ is $\Delta(\mathcal{F})$-fair if it is $\Delta(h)$-fair for all classifiers $h\in\mathcal{F}$.

	Our fairness definition can be interpreted via classification: Observe that the first term in the left-hand-side of (\ref{eq:fairness}) is the true positive rate of the classifier $h$ in predicting the protected class using the dimensionality-reduced variable $\Pi(x)$ at threshold $t$, and the second term is the corresponding false positive rate.  Thus, $\Delta(h)$ in our definition (\ref{eq:fairness}) can be interpreted as bounding the accuracy of the classifier $h$ in predicting the protected class using the dimensionality-reduced variable $\Pi(x)$. 
	
	Note that \cref{eq:fairness} is analogous to \textit{disparate impact} for classifiers \cite{calders2009building,feldman2015certifying}, where we require that treatment not vary at all between protected classes. This has often been criticized as too strict of a notion in classification, and so alternate notions of fairness have been developed, such as \textit{equalized odds} and \textit{equalized opportunity} \cite{hardt2016equality}. Instead of equalizing all treatment across protected classes, these notions instead focus on equalizing error rates; for example, in the case of lending, equalized odds would require nondiscrimination \textit{among all applicants of similar FICO scores}, whereas disparate impact would require nondiscrimination among all applicants. This may be preferred in cases where $y$ and $z$ are strongly correlated. In any case, it can easily be incorporated into our model by simply further conditioning the two terms on the left-hand-side of \cref{eq:fairness} on the main label, $y$.

	\subsection{Motivation}

	The above is a meaningful definition of fairness for dimensionality reduction because it implies that a supervised learner using fair dimensionality-reduced data will itself be fair. This is formalized below:
	
	\begin{proposition}\label{prop:fairness}
	Suppose we have a family of classifiers $\mathcal{F}$ and a dimensionality reduction $\Pi$ that is $\Delta(\mathcal{F})$-fair.  Then any classifier that is selected from $\mathcal{F}$ to predict a label $y\in\{-1,+1\}$ using $\Pi(x)$ as features will have disparate impact less than $\Delta(\mathcal{F})$.
	\end{proposition}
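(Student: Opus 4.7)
My plan is to observe that the proof is essentially a direct unpacking of the definition, since the $\Delta(\mathcal{F})$-fairness condition in \cref{eq:fairness} is a statement about the output distribution of classifiers in $\mathcal{F}$ applied to $\Pi(x)$, and it is completely agnostic to what target label those classifiers are being used to predict. Disparate impact with respect to $y$ for a classifier $\widehat{y}=h(\Pi(x),t)$ is, by standard definition \cite{calders2009building,feldman2015certifying}, the quantity
\begin{equation*}
\mathrm{DI}(h,t) = \bigl|\mathbb{P}[\widehat{y}=+1 \mid z=+1] - \mathbb{P}[\widehat{y}=+1 \mid z=-1]\bigr|,
\end{equation*}
so the statement to be proved is just that $\mathrm{DI}(h,t)\le\Delta(\mathcal{F})$ for every $h\in\mathcal{F}$ and every threshold $t\in\mathbb{R}$.

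First, I would fix an arbitrary classifier $h\in\mathcal{F}$ and threshold $t\in\mathbb{R}$ used to predict $y$ from $\Pi(x)$. Second, I would substitute $\widehat{y}=h(\Pi(x),t)$ into the definition of $\mathrm{DI}(h,t)$ above. Third, I would invoke the assumed $\Delta(\mathcal{F})$-fairness of $\Pi$: since $h\in\mathcal{F}$, the dimensionality reduction is in particular $\Delta(h)$-fair with $\Delta(h)\le\Delta(\mathcal{F})$, and applying \cref{eq:fairness} at this same $t$ yields directly
\begin{equation*}
\bigl|\mathbb{P}[h(\Pi(x),t)=+1\mid z=+1]-\mathbb{P}[h(\Pi(x),t)=+1\mid z=-1]\bigr|\le\Delta(\mathcal{F}),
\end{equation*}
which is exactly $\mathrm{DI}(h,t)\le\Delta(\mathcal{F})$. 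Since $h$ and $t$ were arbitrary elements of $\mathcal{F}$ and $\mathbb{R}$, the disparate impact bound holds uniformly.

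There is really no technical obstacle here; the only subtle point worth flagging in the write-up is conceptual, namely that the left-hand side of \cref{eq:fairness} only depends on the event $\{h(\Pi(x),t)=+1\}$ conditioned on $z$, and never references the true target label of the downstream prediction task. It is this label-agnosticism that makes a fairness guarantee phrased in terms of recoverability of the protected attribute $z$ automatically transfer to a disparate-impact guarantee for any other binary prediction task one later decides to solve with a classifier drawn from $\mathcal{F}$ on top of $\Pi(x)$. I would make this point explicitly in a short concluding sentence, since it is the actual content of the proposition even though the formal derivation is a one-line substitution.
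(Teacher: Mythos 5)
Your proof is correct and matches the paper's approach exactly: the paper simply states that \cref{prop:fairness} ``follows directly from our definition of fairness,'' and your write-up is a faithful, slightly more explicit unpacking of that one-line observation, noting that the disparate-impact quantity coincides with the left-hand side of \cref{eq:fairness}. The concluding remark on label-agnosticism is a nice touch but not a departure from the paper's reasoning.
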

	
	\Cref{prop:fairness} follows directly from our definition of fairness.  We anticipate that in most situations the goal of the dimensionality reduction would not be to explicitly predict the protected class. Thus, our approach of bounding intentional discrimination on $z$ represents a conservative bound on any discrimination that may incidentally arise when performing classificiation using the family $\mathcal{F}$ or when deriving qualitative insights form the results of unsupervised learning.
	
	

	\subsection{Special cases}

	An important special case of our definition occurs for the family $\mathcal{F}_c = \{h(u,t) = \mathbf{1}(u \leq w + t) : w \in \mathbb{R}^d\}$, where the inequality in this expression should be interpreted element-wise.  In this case, our definition can be rewritten as $\textstyle\sup_u\big|F_{\Pi(x)|z=+1}(u) - F_{\Pi(x)|z = -1}(u)\big| \leq \Delta(\mathcal{F}_c)$, where $F$ is the cumulative distribution function (c.d.f.) of the random variable in the subscript.  Restated, for this family our definition is equivalent to saying $\Delta(\mathcal{F})$ is a bound on the Kolmogorov distance between $\Pi(x)$ conditioned on $z=\pm 1$ (i.e., the left-hand side of the above equation).
	
	
	Other important cases are the family of linear support vector machines (SVM's) $\mathcal{F}_v = \{h(u,t) = \mathbf{1}(w^\textsf{T}u - t \leq 0) : w\in\mathbb{R}^d\}$ and the family of kernel SVM's $\mathcal{F}_k$ for a fixed kernel $k$.  These important cases are used in Section \ref{sec:formulation} to propose formulations for fair PCA and fair kernel PCA.
	
	Next, we briefly discuss empirical estimation of $\Delta(\mathcal{F})$.  An empirical estimate of $\Delta(h)$ is given by $\widehat{\Delta}(h) = \sup_t|\frac{1}{\#P}\sum_{i\in P}\mathbf{1}(h(\Pi(x), t) = +1) - \frac{1}{\#N}\sum_{i\in N}\mathbf{1}(h(\Pi(x), t) = +1)|$. Similarly, we define $\widehat{\Delta}(\mathcal{F}) = \sup\{\widehat{\Delta}(h)\ |\ h\in\mathcal{F}\}$.  Last, note that we can provide high probability bounds of the actual fairness level in terms of these empirical estimates:
	
	\begin{proposition}
		\label{pro:glivenko}
		Consider a fixed family of classifiers $\mathcal{F}$.  If the samples $(x_i,z_i)$ are i.i.d., then for any $\delta > 0$ we have with probability at least $1-\exp(-n\delta^2/2)$ that $\Delta(\mathcal{F})\leq\widehat{\Delta}(\mathcal{F})+8\sqrt{\mathcal{V}(\mathcal{F})/n}+\delta$, where $\mathcal{V}(\mathcal{F})$ is the VC dimension of the family $\mathcal{F}$.
	\end{proposition}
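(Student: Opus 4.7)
The plan is to reduce the claim to a uniform deviation bound for empirical probabilities over a VC class and then convert it to a high-probability statement via a bounded-differences concentration inequality. For each $h\in\mathcal{F}$ and threshold $t$, define the set $A_{h,t}=\{x\in\mathbb{R}^p : h(\Pi(x),t)=+1\}$; since $t$ is already part of the specification of $\mathcal{F}$, the family $\mathcal{A}=\{A_{h,t}\}$ inherits VC dimension at most $\mathcal{V}(\mathcal{F})$. Writing $\mu_\pm$ for the conditional law of $x$ given $z=\pm 1$ and $\hat\mu_\pm$ for the empirical counterparts on $P$ and $N$, the fairness quantities take the form $\Delta(\mathcal{F})=\sup_{A\in\mathcal{A}}|\mu_+(A)-\mu_-(A)|$ and $\widehat{\Delta}(\mathcal{F})=\sup_{A\in\mathcal{A}}|\hat\mu_+(A)-\hat\mu_-(A)|$.

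A single triangle inequality then yields the two-sample decomposition
\[\Delta(\mathcal{F})-\widehat{\Delta}(\mathcal{F})\leq\sup_{A\in\mathcal{A}}|\mu_+(A)-\hat\mu_+(A)|+\sup_{A\in\mathcal{A}}|\mu_-(A)-\hat\mu_-(A)|.\]
Each summand is a classical uniform-convergence quantity over a class of VC dimension at most $\mathcal{V}(\mathcal{F})$. Conditioning on the labels $Z$ (so that the subsample driving $\hat\mu_+$ is i.i.d.\ from $\mu_+$, and similarly for $-$), a standard symmetrization plus Dudley-chaining argument (equivalently, the Vapnik--Chervonenkis inequality) bounds each expected supremum by $4\sqrt{\mathcal{V}(\mathcal{F})/n}$, which is the source of the constant $8=4+4$ in the advertised inequality.

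To pass from expectation to a high-probability bound I would invoke McDiarmid's bounded-differences inequality on the combined deviation viewed as a function of the $n$ samples $(x_i,z_i)$: replacing a single pair perturbs $\widehat{\Delta}(\mathcal{F})$ by $O(1/n)$, so the deviation from its conditional expectation is sub-Gaussian with tail $\exp(-n\delta^2/2)$. Combining the expectation bound and the concentration bound gives the claimed inequality.

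The main obstacle is the bookkeeping around the randomness of $\#P$ and $\#N$: the VC expectation bound is most natural when written in terms of per-class sample sizes, while bounded-differences constants must account for label flips that simultaneously reassign mass between $\hat\mu_+$ and $\hat\mu_-$. The cleanest route I see is to perform the VC expectation step conditionally on $Z$, apply McDiarmid to the $x_i$'s with $Z$ frozen, and then integrate over $Z$, using a Hoeffding tail on the binomial counts $\#P,\#N$ to convert per-class sample sizes into the global $n$ while preserving the constants $8$ and $1/2$ in the exponent.
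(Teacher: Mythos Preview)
Your approach is essentially the same as the paper's: the paper's proof is a one-line sketch invoking the triangle inequality to reduce $\Delta(\mathcal{F})-\widehat{\Delta}(\mathcal{F})$ to a generalization error and then citing Dudley's entropy integral for the VC bound, which is exactly your decomposition followed by symmetrization/chaining plus concentration. Your write-up is considerably more detailed than the paper's (and you correctly flag the bounded-differences bookkeeping around random $\#P,\#N$, which the paper does not address); the one caution is that your intermediate claim of an $O(1/n)$ perturbation under a single-sample swap is not literally true without the conditioning-on-$Z$ workaround you describe, so make sure that workaround is what actually drives the final constants.
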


	This result follows from the triangle inequality, bounding $\Delta(\mathcal{F})$ with $\hat{\Delta}(\mathcal{F})$ plus a generalization error, for which there are standard bounds via Dudley's entropy integral \cite{wainwright2017high}.
	
	
	\begin{remark}
		Recall that $\mathcal{V}(\mathcal{F}_c) = d + 1$ \cite{shorack2009empirical}, and that $\mathcal{V}(\mathcal{F}_v) = d+1$ \cite{wainwright2017high}.  This means $\widehat{\Delta}(\mathcal{F}_c)$ and $\widehat{\Delta}(\mathcal{F}_v)$ will be accurate when $n$ is large relative to $d$.
	\end{remark}
	
	\section{Projection defined by PCA}
	
	Our approach to designing an algorithm for fair PCA will begin by first studying the convex relaxation of a non-convex optimization problem whose solution provides the projection defined by PCA.  First, note that computation of the first $d$ PCA components $v_i$ for $i=1,\ldots,d$ can be written as the following non-convex optimization problem: $\max\{\textstyle\sum_{i=1}^{d}v_i^{\textsf{T}}X^{\textsf{T}}Xv_i\ |\ \|v_i\|_2 \leq 1,\ \ v_i^\textsf{T}v_j^{\vphantom{\textsf{T}}} = 0, \text{ for } i\neq j\}$. Now suppose we define the matrix $P = \sum_{i=1}^dv_i^{\vphantom{\textsf{T}}}v_i^\textsf{T}$, and note $\sum_{i=1}^{d}v_i^{\textsf{T}}X^{\textsf{T}}Xv_i = \sum_{i=1}^d\langle X^{\textsf{T}}X, v_i^{\vphantom{\textsf{T}}}v_i^\textsf{T}\rangle = \langle X^{\textsf{T}}X, P\rangle$.  Thus, we can rewrite the above optimization problem as
	\begin{equation}
	\label{eq:topeig2}
	\max\big\{\langle X^{\textsf{T}}X,P\rangle\ \big|\ \rank(P) \leq d,\, \mathbb{I}\succeq P\succeq 0\big\}.
	\end{equation}
	
	In the above problem, we should interpret the optimal $P^*$ to be the projection matrix that projects $x\in\mathbb{R}^p$ onto the $d\,\!$ PCA components (still in the original $p$-dimensional space). Next, we consider a convex relaxation of (\ref{eq:topeig2}).  Since $\mathbb{I}-P \succeq 0$, the usual nuclear norm relaxation is equivalent to the trace \cite{recht2010guaranteed}.  So our convex relaxation is
	\begin{equation}
	\label{eq:topeig3}
	\max\big\{\langle X^{\textsf{T}}X,P\rangle\ \big|\ \trace(P) \leq d,\; \mathbb{I}\succeq P\succeq 0\big\}.
	\end{equation}
	Note that this base model is the same as that used by \cite{arora2013stochastic}. The following result shows that we can recover the first $d$ PCA components from any $P^*$ that solves (\ref{eq:topeig3}).
	
	
	\begin{theorem}
		\label{thm:cr}
		Let $P^*$ be an optimal solution of (\ref{eq:topeig3}), and consider its diagonalization: $P^* = \sum_{i=1}^p\lambda^*_iv_i^{\vphantom{\mathsf{T}}}v_i^\mathsf{T}$, where $v_i$ is an orthonormal basis, and (without loss of generality) the $\lambda^*_i$ are in non-increasing order.  Then the positive semidefinite $P^{**} \triangleq \sum_{i=1}^dv_i^{\vphantom{\mathsf{T}}}v_i^\mathsf{T}$ is an optimal solution to (\ref{eq:topeig2}).
	\end{theorem}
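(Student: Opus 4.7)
Plan. I will show that $P^{**}$ is feasible for (\ref{eq:topeig2}) and attains the same $\langle X^\mathsf{T}X,\cdot\rangle$-value as $P^*$. Since (\ref{eq:topeig3}) is a relaxation of (\ref{eq:topeig2}), its optimal value upper-bounds that of (\ref{eq:topeig2}), so a feasible point of (\ref{eq:topeig2}) matching the SDP value must be optimal for (\ref{eq:topeig2}).

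Feasibility of $P^{**}$ is immediate from orthonormality of $\{v_i\}$: it is the orthogonal projector onto $\mathrm{span}(v_1,\ldots,v_d)$, so $\rank(P^{**})\leq d$ and $0\preceq P^{**}\preceq\mathbb{I}$. Setting $\mu_i := v_i^\mathsf{T} X^\mathsf{T} X v_i^{\vphantom{\mathsf{T}}} \geq 0$ and expanding in the orthonormal basis $\{v_i\}$, the objective comparison reduces to
\[
\langle X^\mathsf{T}X, P^*\rangle - \langle X^\mathsf{T}X, P^{**}\rangle \;=\; \sum_{i=d+1}^{p}\lambda_i^*\mu_i \;-\; \sum_{i=1}^{d}(1-\lambda_i^*)\mu_i,
\]
so it suffices to show this difference vanishes.

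For this I use the KKT conditions of (\ref{eq:topeig3}), which apply because Slater's condition holds. They yield a scalar $\alpha\geq 0$ and PSD matrices $M,N$ satisfying $X^\mathsf{T}X - \alpha\mathbb{I} = N - M$ together with $MP^* = 0$ and $N(\mathbb{I} - P^*) = 0$. Reading these relations off each eigenvector $v_i$ of $P^*$ gives $\mu_i \geq \alpha$ when $\lambda_i^* = 1$, $\mu_i = \alpha$ when $\lambda_i^* \in (0,1)$, and $\mu_i \leq \alpha$ when $\lambda_i^* = 0$. Let $k_1 = |\{i:\lambda_i^* = 1\}|$ and $k_2 = |\{i:\lambda_i^*\in(0,1)\}|$. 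If $\alpha > 0$, complementary slackness forces $\trace(P^*) = d$, and combined with $\lambda_i^* \in [0,1]$ this yields $k_1 \leq d \leq k_1 + k_2$; the nonzero-coefficient terms in either sum then involve only indices $i \in [k_1+1, k_1+k_2]$, where $\mu_i = \alpha$, and a short calculation using $\sum_{i=k_1+1}^{k_1+k_2}\lambda_i^* = d - k_1$ shows that both sums equal $\alpha\bigl(d - k_1 - \sum_{i=k_1+1}^{d}\lambda_i^*\bigr)$. If instead $\alpha = 0$, the KKT relations together with $X^\mathsf{T}X \succeq 0$ force $\mu_i = 0$ whenever $\lambda_i^* < 1$, so both sums vanish directly.

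The main obstacle is the fractional case where $P^*$ has eigenvalues strictly between $0$ and $1$: $P^*$ itself is then not feasible for (\ref{eq:topeig2}), and no naive rearrangement argument on the difference works because $\mu_i$ need not be sorted in the same order as $\lambda_i^*$. KKT resolves this by pinning fractional eigenvalues of $P^*$ to occur exactly in directions where $X^\mathsf{T}X$ has eigenvalue $\alpha$, making the redistribution of eigenvalue mass from indices $i > d$ to indices $i \leq d$ precisely value-neutral.
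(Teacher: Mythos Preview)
Your proof is correct but follows a genuinely different route from the paper. The paper argues by convexity: in the case $\rank(P^*)>d$ it exhibits an explicit feasible point $\tilde P=(P^*-cP^{**})/(1-c)$ with $c=\min\{\lambda_d^*,1-\lambda_d^*\}$, so that $P^*$ is a strict convex combination of $P^{**}$ and $\tilde P$; linearity of the objective over the convex feasible set then forces $P^{**}$ to be optimal for (\ref{eq:topeig3}), and it is visibly feasible for (\ref{eq:topeig2}). The low-rank case is handled by a direct improvement argument. Your approach instead invokes Slater and the KKT system for (\ref{eq:topeig3}), deducing that fractional eigenvalues of $P^*$ can occur only in directions with $v_i^\mathsf{T}X^\mathsf{T}Xv_i=\alpha$, and then computing the objective gap to be zero. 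The paper's argument is more elementary---no duality or constraint qualification needed, just convexity and linearity---and transfers verbatim to any linear objective over a convex superset of the rank-$d$ projectors. Your KKT argument, on the other hand, yields more structural information: it makes explicit the ``threshold'' characterization $\mu_i\gtreqqless\alpha\Leftrightarrow\lambda_i^*\in\{1,(0,1),0\}$, which is the standard optimality description for the Fantope relaxation and clarifies exactly when the SDP has non-projector optima.
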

	
	\begin{proof}
		We consider two cases. First, if $\rank(P^*)\le d$ then $\lambda^*_i\in\{0,1\}$ or $v_i^{\textsf{T}}X^{\textsf{T}}X^{\vphantom{\textsf{T}}}v_i^{\vphantom{\textsf{T}}}=0$ for all $i$, since otherwise we could increase $\lambda^*_i$ if $v_i^{\textsf{T}}X^{\textsf{T}}X^{\vphantom{\textsf{T}}}v_i^{\vphantom{\textsf{T}}}>0$ (or vice versa) to improve the objective while maintaining feasibility. It follows that $\langle X^{\textsf{T}}X,P^*\rangle=\langle X^{\textsf{T}}X,P^{**}\rangle$. This means that $P^{**}$ is optimal for (\ref{eq:topeig3}); since it is also feasible for (\ref{eq:topeig2}), we are done. Second, if $\rank(P^*)>d$ then $0<\lambda_d^*<1$ since the $\lambda_i^*$ are ordered. Consider $\tilde{P}\triangleq(P^*-cP^{**})/(1-c),c=\min\{\lambda^*_d,1-\lambda^*_d\}$. Note that $\tilde{P}$ is feasible for (\ref{eq:topeig3}), and that $P^*$ is a strict convex combination of $P^{**}$ and $\tilde{P}$. All points between $\tilde{P}$ and $P^{**}$ are feasible by convexity, and so the optimality of $P^*$ implies that $P^{**}$ and $\tilde{P}$ must also be optimal for (\ref{eq:topeig3}) by linearity of the objective (i.e., at least one must have objective value no less than that of $P^*$, but if one had a strictly better objective value than the other, then no strict convex combination of the two could be optimal). The result then follows from the optimality of $P^{**}$ for (\ref{eq:topeig3}) and feasibility for (\ref{eq:topeig2}).
	\end{proof}

	We conclude this section with two useful results on the spectral norm $\|\cdot\|_2$ of a symmetric matrix.
	
	\begin{figure*}[!t]
		\begin{center}
			\begin{subfigure}[t]{0.24\linewidth}
				\includegraphics[width=\linewidth]{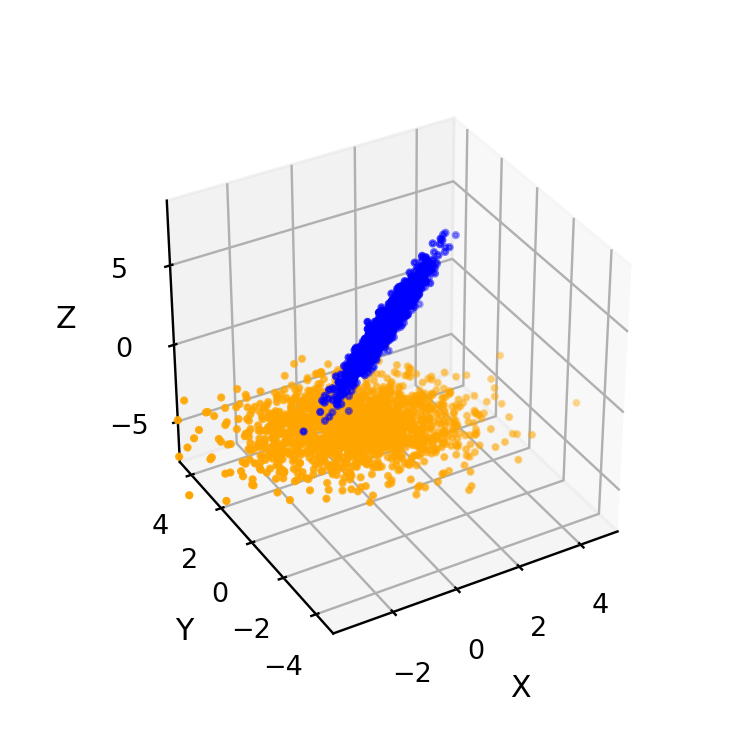}
				\caption{\label{fig:rand3D} Original data}
			\end{subfigure}\hfill
			\begin{subfigure}[t]{0.24\linewidth}
				\includegraphics[width=\linewidth]{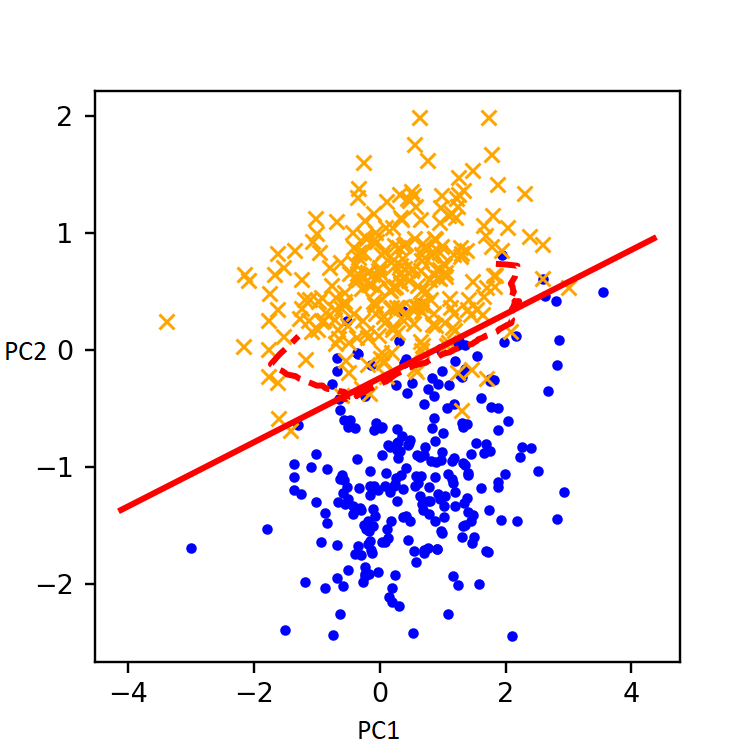}
				\caption{\label{fig:randNone} PCA}
			\end{subfigure}\hfill
			\begin{subfigure}[t]{0.24\linewidth}
				\includegraphics[width=\linewidth]{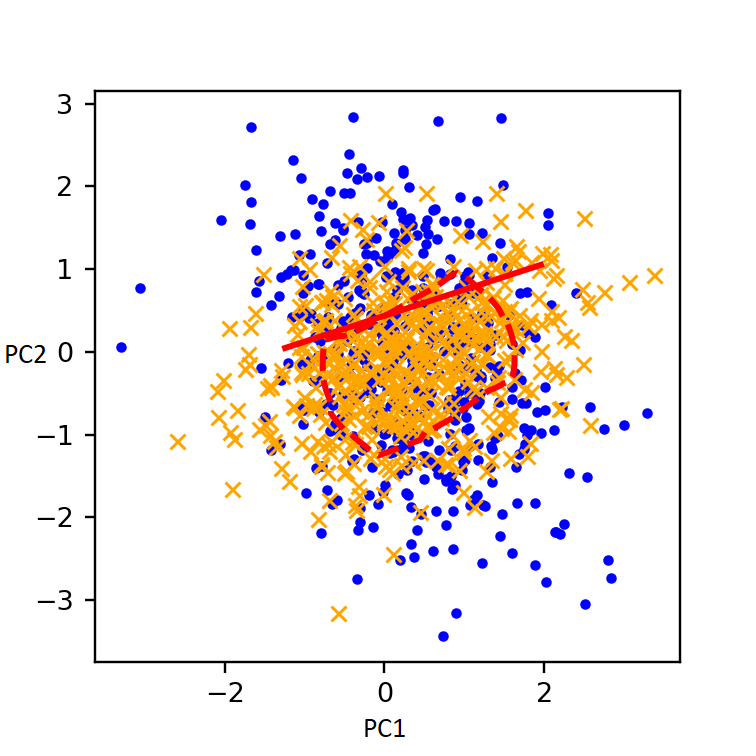}
				\caption{\label{fig:randOne} FPCA - Mean con.}
			\end{subfigure}\hfill
			\begin{subfigure}[t]{0.24\linewidth}
				\includegraphics[width=\linewidth]{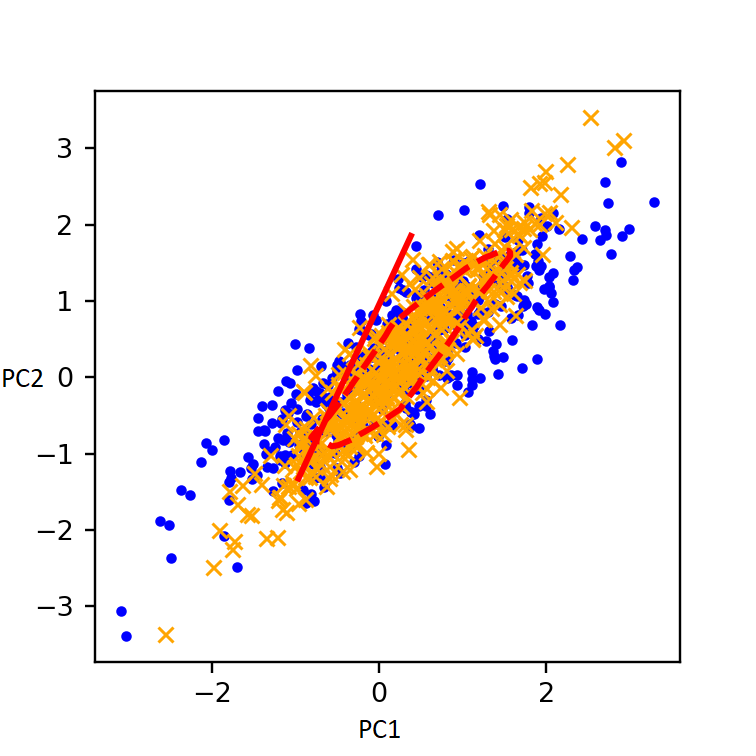}
				\caption{\label{fig:randBoth} FPCA - Both con.}
			\end{subfigure}
			\caption{\label{fig:allrand} Comparison of PCA and FPCA on synthetic data. In each plot, the thick red line is the optimal linear SVM separating by color, and the dotted line is the optimal Gaussian kernel SVM.}
		\end{center}
	\end{figure*}
	
	\begin{theorem}
		\label{thm:sym}
		Let $Q$ be a symmetric matrix, and suppose $\varphi \geq \|Q\|_2$.  Then $\|Q\|_2 = \max\{\|Q + \varphi\mathbb{I}\|_2, \|-Q + \varphi\mathbb{I}\|_2\} - \varphi$.
	\end{theorem}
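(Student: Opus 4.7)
The plan is to reduce the spectral norm identity to a comparison of eigenvalues using the spectral theorem, which is available because $Q$ is symmetric. Diagonalizing $Q = V \Lambda V^\textsf{T}$ with real eigenvalues $\lambda_1,\ldots,\lambda_p$, the spectral norm satisfies $\|Q\|_2 = \max_i |\lambda_i| = \max\{\lambda_{\max}, -\lambda_{\min}\}$, where $\lambda_{\max}$ and $\lambda_{\min}$ are the largest and smallest eigenvalues.

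The key observation I would exploit is that the hypothesis $\varphi \geq \|Q\|_2$ forces $Q + \varphi\mathbb{I}$ and $-Q + \varphi\mathbb{I}$ to both be positive semidefinite. Indeed, the eigenvalues of $Q + \varphi\mathbb{I}$ are $\lambda_i + \varphi$, and $\varphi \geq \|Q\|_2 \geq -\lambda_i$ yields $\lambda_i + \varphi \geq 0$; the same argument applies to $-Q + \varphi\mathbb{I}$. For a positive semidefinite matrix, the spectral norm equals its largest eigenvalue, so $\|Q + \varphi\mathbb{I}\|_2 = \lambda_{\max} + \varphi$ and $\|-Q + \varphi\mathbb{I}\|_2 = -\lambda_{\min} + \varphi$.

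Combining these two evaluations gives
\begin{equation*}
\max\{\|Q+\varphi\mathbb{I}\|_2,\|-Q+\varphi\mathbb{I}\|_2\} - \varphi = \max\{\lambda_{\max}+\varphi,-\lambda_{\min}+\varphi\}-\varphi = \max\{\lambda_{\max},-\lambda_{\min}\},
\end{equation*}
which equals $\|Q\|_2$ as noted above, completing the proof.

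There is no real obstacle here; the only subtlety worth flagging explicitly is the bound $\varphi \geq -\lambda_i$ used to certify positive semidefiniteness, which follows from $\|Q\|_2 \geq |\lambda_i|$ and would fail without the hypothesis $\varphi \geq \|Q\|_2$. The symmetry of $Q$ is what allows the passage from spectral norm to a maximum over real eigenvalues, and without it the identity would not be a simple scalar computation.
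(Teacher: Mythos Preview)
Your proof is correct and follows essentially the same approach as the paper: diagonalize $Q$, use $\varphi \geq \|Q\|_2$ to conclude that all shifted eigenvalues $\lambda_i + \varphi$ and $-\lambda_i + \varphi$ are nonnegative, read off $\|Q+\varphi\mathbb{I}\|_2 = \lambda_{\max}+\varphi$ and $\|-Q+\varphi\mathbb{I}\|_2 = -\lambda_{\min}+\varphi$, and take the max minus $\varphi$. The only cosmetic difference is that the paper orders the eigenvalues and writes $\lambda_1,\lambda_p$ where you write $\lambda_{\max},\lambda_{\min}$.
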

	
	\begin{proof}
		First diagonalize $Q = \sum_{i=1}^p\lambda_i^{\vphantom{\textsf{T}}}v_i^{\vphantom{\textsf{T}}}v_i^\textsf{T}$, with orthonormal basis $v_i$ and (without loss of generality) $\lambda_i$ in non-increasing order. Then $+Q + \varphi\mathbb{I} = \textstyle\sum_{i=1}^p(+\lambda_i^{\vphantom{\textsf{T}}}+\varphi)v_i^{\vphantom{\textsf{T}}}v_i^\textsf{T},\; -Q + \varphi\mathbb{I} = \textstyle\sum_{i=1}^p(-\lambda_i^{\vphantom{\textsf{T}}}+\varphi)v_i^{\vphantom{\textsf{T}}}v_i^\textsf{T}$.	But by construction $\lambda_i + \varphi \geq 0$ and $-\lambda_i+\varphi \geq 0$ for all $i = 1,\ldots,p$.  Thus $\|Q + \varphi\mathbb{I}\|_2 = \lambda_1 + \varphi$ and $\|-Q+\varphi\mathbb{I}\|_2 = -\lambda_p + \varphi$.  The result follows since $\|Q\|_2 = \max\{\lambda_1, -\lambda_p\}$.
	\end{proof}
	
	\begin{corollary}
		\label{cor:sn}
		Let $Q$ be a symmetric matrix, and suppose $\varphi \geq \|Q\|_2$.  If $V$ is such that $V^\mathsf{T}V=\mathbb{I}$, then $\|V^\mathsf{T}QV\|_2 = \max\{\|V^\mathsf{T}(Q+\varphi\mathbb{I})V\|_2, \|V^\mathsf{T}(-Q+\varphi\mathbb{I})V\|_2\} - \varphi$.
	\end{corollary}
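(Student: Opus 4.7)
The plan is to reduce Corollary \ref{cor:sn} to a direct application of Theorem \ref{thm:sym} with $Q$ replaced by $\widetilde Q \triangleq V^\mathsf{T}QV$. For this reduction to go through I need two small facts: (i) $\widetilde Q$ is symmetric, and (ii) the same scalar $\varphi$ used for $Q$ is still a valid upper bound, i.e.\ $\varphi \geq \|\widetilde Q\|_2$.

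For (i), symmetry of $\widetilde Q$ is immediate from $\widetilde Q^\mathsf{T} = V^\mathsf{T}Q^\mathsf{T}V = V^\mathsf{T}QV = \widetilde Q$. For (ii), I would use the hypothesis $V^\mathsf{T}V = \mathbb{I}$, which makes $V$ an isometry on its domain: for any unit vector $u$, one has $\|Vu\|_2^2 = u^\mathsf{T}V^\mathsf{T}Vu = \|u\|_2^2 = 1$, and therefore $|u^\mathsf{T}\widetilde Qu| = |(Vu)^\mathsf{T}Q(Vu)| \leq \|Q\|_2\,\|Vu\|_2^2 \leq \varphi$. Taking the supremum over unit $u$ yields $\|\widetilde Q\|_2 \leq \|Q\|_2 \leq \varphi$, which is exactly what Theorem \ref{thm:sym} requires.

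Next I would rewrite the two quantities appearing on the right-hand side of the corollary in terms of $\widetilde Q$. Since $V^\mathsf{T}V = \mathbb{I}$, distributing gives
\begin{equation*}
V^\mathsf{T}(Q + \varphi\mathbb{I})V = \widetilde Q + \varphi\mathbb{I}, \qquad V^\mathsf{T}(-Q + \varphi\mathbb{I})V = -\widetilde Q + \varphi\mathbb{I}.
\end{equation*}
Substituting these identities into the desired equation reduces the claim to
\begin{equation*}
\|\widetilde Q\|_2 = \max\{\|\widetilde Q + \varphi\mathbb{I}\|_2,\ \|-\widetilde Q + \varphi\mathbb{I}\|_2\} - \varphi,
\end{equation*}
which is precisely the statement of Theorem \ref{thm:sym} applied to the symmetric matrix $\widetilde Q$ with the scalar $\varphi \geq \|\widetilde Q\|_2$ verified above. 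Invoking Theorem \ref{thm:sym} closes the argument.

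There is no real obstacle here; the only step that requires care is verifying that $\varphi$ remains an upper bound after conjugation by $V$, which is where the assumption $V^\mathsf{T}V = \mathbb{I}$ (as opposed to $VV^\mathsf{T} = \mathbb{I}$, which is stronger and would not generally hold for rectangular $V$) gets used. The rest is bookkeeping that rewrites $V^\mathsf{T}(Q \pm \varphi\mathbb{I})V$ in the form required to quote the preceding theorem.
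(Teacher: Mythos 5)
Your proof is correct and follows essentially the same route as the paper: rewrite $V^\mathsf{T}(\pm Q + \varphi\mathbb{I})V$ using $V^\mathsf{T}V = \mathbb{I}$, verify that $\varphi \geq \|V^\mathsf{T}QV\|_2$, and then quote Theorem~\ref{thm:sym} on the conjugated matrix. The only cosmetic difference is that you bound $\|V^\mathsf{T}QV\|_2$ via the Rayleigh-quotient characterization while the paper invokes submultiplicativity of the spectral norm (using $\|V\|_2 = 1$); both are one-line observations and the structure of the argument is identical.
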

	
	\begin{proof}
		First note that $V^\textsf{T}(Q+\varphi\mathbb{I})V = V^\textsf{T}QV+\varphi\mathbb{I}$ and that $V^\textsf{T}(-Q+\varphi\mathbb{I})V = -V^\textsf{T}QV+\varphi\mathbb{I}$. Since the spectral norm is submultiplicative, this means $\|V^\textsf{T}QV\|_2 \leq \|V^\textsf{T}\|_2\|Q\|_2\|V\|_2 \leq \|Q\|_2$.  So $\varphi \geq \|V^\textsf{T}QV\|_2$, and the result follows by applying Theorem \ref{thm:sym} to $V^\textsf{T}QV$.
	\end{proof}
	
	Recall that using the Schur complement allows representation of $\|VRV^\mathsf{T}\|_2$ as a positive semidefinite matrix constraint when $R$ is positive semidefinite \cite{boyd1994linear}.  So the above corollary is useful because it means we can represent $\|VQV^\mathsf{T}\|_2$ using positive semidefinite matrix constraints since $(Q+\varphi\mathbb{I})$ and $(-Q+\varphi\mathbb{I})$ are positive semidefinite by construction.

	\section{Designing formulations for fair PCA}
	\label{sec:formulation}

	
	
	Consider the linear dimensionality reduction $\Pi(x) = V^\mathsf{T}x$ for $V\in\mathbb{R}^{p\times d}$ such that $V^\textsf{T}V = \mathbb{I}$.  Then for linear classifier $h(u,t) = \mathbf{1}(w^\textsf{T}u - t \leq 0)$, definition (\ref{eq:fairness}) simplifies to $\Delta(h) = \sup_t|\mathbb{P}[w^\textsf{T}V^\mathsf{T}x \leq t | z = +1]-\mathbb{P}[w^\textsf{T}V^\mathsf{T}x \leq t| z = -1]|$. But the right-hand side is the Kolmogorov distance between $w^\textsf{T}V^\mathsf{T}x$ conditioned on $z=\pm 1$, which is upper bounded (as can be seen trivially from its definition) by the total variation distance.  Consequently, applying Pinsker's inequality \cite{massart2007concentration} gives $\Delta(h) \leq \sqrt{\frac{1}{2}\mathcal{KL}\big(w^\textsf{T}V^\mathsf{T}X_-\, \big|\big|\, w^\textsf{T}V^\mathsf{T}X_+\big)},$ where $\mathcal{KL}(\cdot||\cdot)$ is the Kullback-Leibler divergence, $X_+$ is the random variable $[x | z = +1]$, and $X_-$ is the random variable $[x | z=-1]$.  For the special case $X_+\sim\mathcal{N}(\mu_+,\Sigma_+)$ and $X_-\sim\mathcal{N}(\mu_-,\Sigma_-)$, we have \cite{kullback1997information}:
	\begin{equation}
	\label{eqn:klbd}
	\Delta(h) \leq \sqrt{\frac{1}{4}\left(\frac{s_-}{s_+}+\frac{(m_+-m_-)^2}{s_+}+\log\frac{s_+}{s_-}-1\right)}.
	\end{equation}
	where $s_+ = w^\textsf{T}V^\mathsf{T}\Sigma_+Vw$, $s_- = w^\textsf{T}V^\mathsf{T}\Sigma_-Vw$, $m_+ = w^\textsf{T}V^\mathsf{T}\mu_+$, and $m_- = w^\textsf{T}V^\mathsf{T}\mu_-$.  The key observation here is that (\ref{eqn:klbd}) is minimized when $s_+ = s_-$ and $m_+ = m_-$, and we will use this insight to propose constraints for FPCA. If $X_+$ and $X_-$ are not Gaussian, the three-point property may be used to obtain a similar bound with a couple extra terms involving the divergence between $X_+$ and a normal distribution with the same mean and variance (and the analog for $X_-$).
	
	We first design constraints for the non-convex formulation (\ref{eq:topeig2}) so that $\hat{m}_+ - \hat{m}_- = w^\textsf{T}V^\mathsf{T}f$ has small magnitude, where $f = \hat{\mu}_+ - \hat{\mu}_- = \frac{1}{\#P}\sum_{i\in P}x_i - \frac{1}{\#N}\sum_{i\in N}x_i$.  Note we make the identification $P = VV^\mathsf{T}$ because of the properties of $P$ in (\ref{eq:topeig2}) and since $V^\textsf{T}V = \mathbb{I}$.  Observe that $w^\textsf{T}V^\mathsf{T}f$ is small if $V^\mathsf{T}f$ is small, which can be formulated as
	\begin{equation}
	\label{eqn:m1}
	\|V^\mathsf{T}f\|^2 = \langle VV^\textsf{T}, ff^\textsf{T}\rangle = \langle P, ff^\textsf{T}\rangle\leq \delta^2,
	\end{equation}
	where $\|\cdot\|$ is the $\ell_2$-norm, and $\delta$ is a bound on the norm.  This (\ref{eqn:m1}) is a linear constraint on $P$.
	
	We next design constraints for the non-convex formulation (\ref{eq:topeig2}) so that $\hat{s}_+ - \hat{s}_- = w^\textsf{T}V^\mathsf{T}(\widehat{\Sigma}_+-\widehat{\Sigma}_-)Vw$ has small magnitude.  Recall the identification $P = VV^\textsf{T}$ because of the properties of $P$ in (\ref{eq:topeig2}) and since $V^\textsf{T}V = \mathbb{I}$.  Next observe that $w^\textsf{T}V^\mathsf{T}(\widehat{\Sigma}_+-\widehat{\Sigma}_-)Vw$ is small if $V^\mathsf{T}(\widehat{\Sigma}_+-\widehat{\Sigma}_-)V$ is small.  Let $Q = \widehat{\Sigma}_+-\widehat{\Sigma}_-$, then using Corollary \ref{cor:sn} gives
	\begin{multline}
	\label{eqn:c}
	\mu+\varphi \geq \|V^\textsf{T}QV\|_2 +\varphi \\
	= \max\{\|V^\mathsf{T}(Q+\varphi\mathbb{I})V\|_2, \|V^\mathsf{T}(-Q+\varphi\mathbb{I})V\|_2\} \\
	=\max\{\|VV^\textsf{T}(Q+\varphi\mathbb{I})VV^\mathsf{T}\|_2, \|VV^\textsf{T}(-Q+\varphi\mathbb{I})VV^\mathsf{T}\|_2\} \\
	=\max\{\|P(Q+\varphi\mathbb{I})P\|_2, \|P(-Q+\varphi\mathbb{I})P\|_2\},
	\end{multline}
	where $\varphi \geq \|\widehat{\Sigma}_+-\widehat{\Sigma}_-\|_2$, and $\mu$ is a bound on the norm. Note (\ref{eqn:c}) can be rewritten as SDP constraints using a standard reformulation for the spectral norm \cite{boyd1994linear}.
	
	
	We design an SDP formulation for FPCA by combining the above elements.  Though (\ref{eq:topeig2}) with constraint  (\ref{eqn:m1}) and (\ref{eqn:c}) is a non-convex problem for FPCA, we showed in Theorem \ref{thm:cr} that (\ref{eq:topeig3}) was an exact relaxation of (\ref{eq:topeig2}) after extracting the $d$ largest eigenvectors of the solution of (\ref{eq:topeig3}).  Thus, we propose the following SDP formulation for FPCA:
	\begin{subequations}\label{eqn:sdpf}
		\begin{align}
		\max\ &\langle X^{\textsf{T}}X,P\rangle - \mu t\\
		\text{s.t. } & \!\trace(P) \leq d,\ \mathbb{I}\succeq P\succeq 0\\
		&\langle P, ff^\textsf{T}\rangle\leq \delta^2\label{eqn:mc}\\
		&\!\!\begin{bmatrix}t\mathbb{I} & PM_+ \\ M_+^{\textsf{T}}P & \mathbb{I}\end{bmatrix}\succeq 0,\label{eqn:cc1}\\
		&\!\!\begin{bmatrix}t\mathbb{I} & PM_- \\ M_-^{\textsf{T}}P & \mathbb{I}\end{bmatrix}\succeq 0\label{eqn:cc2}
		\end{align}
	\end{subequations}
	
	where $M_iM_i^{\textsf{T}}$ is the Cholesky decomposition of $iQ+\varphi \mathbb{I}$ ($i\in\{-,+\}$), $\varphi \geq \|\widehat{\Sigma}_+-\widehat{\Sigma}_-\|_2$, (\ref{eqn:mc}) is called the \emph{mean constraint} and denotes the use (\ref{eqn:m1}), and (\ref{eqn:cc1}) and (\ref{eqn:cc2}) are called the \emph{covariance constraints} and are the SDP reformulation of (\ref{eqn:c}).  Our convex formulation for FPCA consists of solving (\ref{eqn:sdpf}) and then extracting the $d$ largest eigenvectors from the optimal $P^*$.  
	
	We can apply the kernel trick to (\ref{eqn:sdpf}) to develop an SDP for F-KPCA. For brevity, we only note the differences with (\ref{eqn:sdpf}): $Q$ would be replaced with $Q_k = K(X,X_+)K(X_+,X)-K(X,X_-)K(X_-,X)$ and $f$ with $f_k = \frac{1}{\#P}K(X,X_+)\mathbf{e} - \frac{1}{\#N}K(X,X_-)\mathbf{e}$. $M_+$ and $M_-$ would then be the Cholesky decompositions of the analogous matrices, and $\varphi$ would also have to be set no less than $\|Q_k\|_2$. K-FPCA is then the top $d$ eigenvectors of the optimal solution of the resulting SDP.
	
	Furthermore, this method may be extended to multiple protected attributes by replicating constraints (\ref{eqn:mc}), (\ref{eqn:cc1}) \& (\ref{eqn:cc2}) appropriately. That is, for secondary protected attribute $z'$, we may define the appropriate $f$, $M_+$ and $M_-$ values and add the analogous constraints. Note that this will only abet ``pairwise fairness", or fairness with respect to each of the protected attributes individually. To attain ``joint fairness", or fairness with respect to both terms simultaneously, we would need to recreate constraints (\ref{eqn:mc}), (\ref{eqn:cc1}) \& (\ref{eqn:cc2}) for $z'$ as well as the interaction between $z$ and $z'$, which we can denote by $z_{\textrm{inter}}=\left[\frac{(z_i+1)(z'_i+1)}{2}\right]_i$. This is important because it is possible to attain mathematical fairness with respect to gender and race, for example, while still exhibiting discrimination towards women of one particular racial group.
	
	

	\section{Experimental results}
	\label{sec:experiment}
	
	We use synthetic and real datasets from the UC Irvine Machine Learning Repository \cite{Lichman:2013} to demonstrate the efficacy of our SDP formulations. We also show how FPCA can be used to minimize discrimination in health insurance rates (with respect to age). For any SVM run, tuning parameters were chosen using 5-fold cross-validation, and data was normalized to have unit variance in each field. Due to constraints on space, further experimental results and a overview of and comparison to the method of \citeauthor{calmon2017optimized} are presented in the appendix.

	\subsection{Synthetic Data}
	\label{sec:simulateddata}

	We sampled 1000 points each from $X_+$ and $X_-$ distributed as different 3-dimensional multivariate Gaussians, and these points are shown in \Cref{fig:rand3D}. \Cref{fig:randNone} displays the results of dimensionality reduction using the top two unconstrained principal components of $X$: the resulting separators for linear and Gaussian kernel SVM's are also shown. It is clear that the two sub-populations are readily distinguishable in the lower-dimensional space. \Cref{fig:randOne} displays the analogous information after FPCA with only the mean constraint, and \Cref{fig:randBoth} after FPCA with both constraints. \Cref{fig:randOne,fig:randBoth} clearly display better mixing of the data, and the SVM's conducted afterwards are unable to separate the sub-groups as cleanly as they can in \Cref{fig:randNone}; furthermore, the addition of the covariance constraints (\ref{eqn:cc1}) incentivizes the choosing of a dimensionality reduction that better matches the skew of the entire data set.


	\begin{figure}
	\begin{center}
		\centerline{\includegraphics[width=\columnwidth]{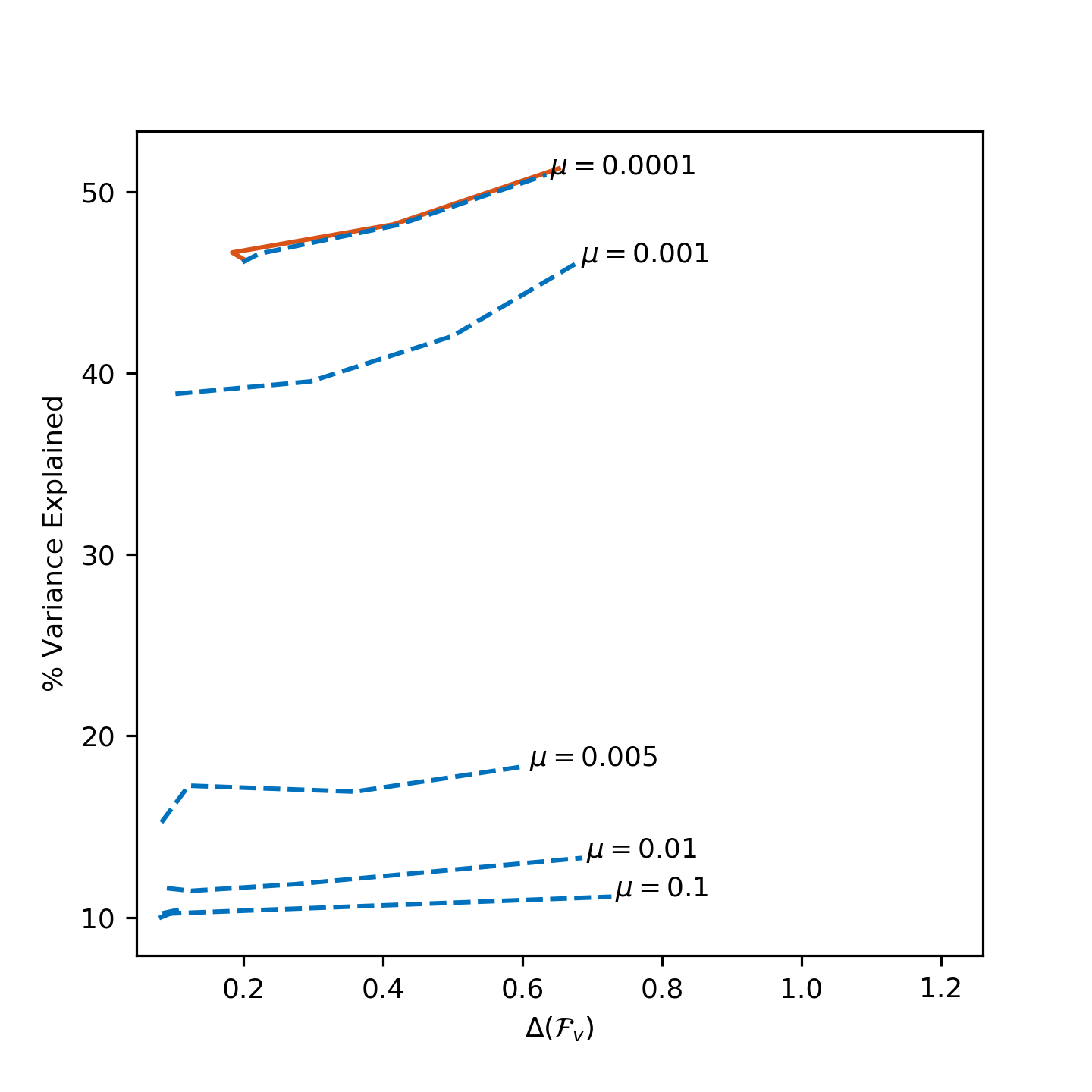}}
		\caption{The sensitivity of FPCA to the $\delta$ and $\mu$ for the wine quality dataset. The full red line represents FPCA with only the mean constraint, and the dotted blue lines denote FPCA with both constraints. For each curve, $\delta\in\{0,0.1,0.3,0.5\}$ was considered.}
		\label{fig:sensitivity}
	\end{center}
	\end{figure}

	\subsection{Real data}
	\label{sec:realdata}
	
	\begin{table*}[!t]
		\caption{$\Delta$-fairness for both linear and Gaussian kernel SVM for PCA and FPCA. Best results for each fairness metric are bolded.}
		\label{tab:results}
		\begin{center}
			\begin{small}
				\begin{sc}
					\begin{tabular}{l|ccc|ccc|cccr}
						\toprule
						& \multicolumn{3}{c}{Unconstrained} & \multicolumn{3}{c}{FPCA - Mean Con.} & \multicolumn{3}{c}{FPCA - Both Con.} & \\
						Data Set& \%var & Lin. & Gaus. & \%var & Lin. & Gaus. & \%var & Lin. & Gaus. \\
						\midrule
						Adult Income & 11.41 & 0.54 & 0.54 & 9.27 & 0.14 & 0.35 & 5.33 & \bf 0.07 & \bf 0.15 \\
						Biodeg \footnotemark{} & 31.16 & 0.2 & 0.35 & 30.46 & 0.14 & 0.29 & 21.45 & \bf 0.10 & \bf 0.28 \\
						E. Coli \footnotemark{} & 65.01 & 0.65 & 0.80 & 54.31 & 0.46 & 0.59 & 53.75 & \bf 0.24 & \bf 0.54 \\
						Energy \footnotemark{} & 84.08 & 0.10 & 0.20 & 66.48 & \bf 0.07 & 0.20 & 62.11 & \bf 0.07 & \bf 0.16 \\
						German Credit & 11.19 & 0.21 & 0.31 & 10.91 & 0.14 & 0.33 & 8.84 & \bf 0.11 & \bf 0.29 \\
						Image & 62.68 & 0.18 & 0.32 & 52.78 & \bf 0.14 & 0.33 & 48.55 & 0.15 & \bf 0.28 \\
						Letter & 42.33 & 0.58 & 0.58 & 29.29 & \bf 0.07 & 0.22 & 23.76 & \bf 0.07 & \bf 0.19 & \\
						Magic \footnotemark{} & 61.91 & 0.32 & 0.33 & 29.57 & \bf 0.11 & \bf 0.21 & 25.36 & 0.12 & 0.30 \\
						Pima \footnotemark{} & 49.00 & 0.30 & 0.37 & 43.98 & \bf 0.17 & 0.26 & 43.26 & 0.18 & \bf 0.25 \\
						Recidivism \footnotemark{} & 56.28 & 0.24 & 0.26 & 46.58 & \bf 0.08 & \bf 0.16 & 39.34 & \bf 0.08 & 0.21 \\
						Skillcraft \footnotemark{} & 40.62 & 0.15 & 0.19 & 29.95 & \bf 0.07 & \bf 0.14 & 25.48 & \bf 0.07 & 0.17 \\
						Statlog & 87.80 & 0.79 & 0.79 & 21.77 & 0.23 & 0.69 & 7.76 & \bf 0.13 & \bf 0.44 \\
						Steel & 46.05 & 0.64 & 0.71 & 40.79 & 0.19 & 0.51 & 11.86 & \bf 0.09 & \bf 0.22 \\
						Taiw. Credit \footnotemark{} & 45.52 & 0.11 & 0.17 & 30.07 & 0.08 & 0.16 & 20.08 & \bf 0.06 & \bf 0.14 \\
						Wine Quality \footnotemark{} & 50.21 & 0.97 & 0.96 & 37.34 & 0.21 & 0.51 & 10.12 & \bf 0.06 & \bf 0.13 \\
						\bottomrule
					\end{tabular}
				\end{sc}
			\end{small}
		\end{center}
	\end{table*}
	
	We next consider a selection of datasets from UC Irvine's online Machine Learning Repository \cite{Lichman:2013}. For each of the datasets, one attribute was selected as a protected class, and the remaining attributes were considered part of the feature space. After splitting each dataset into separate training (70\%) and testing (30\%) sets, the top five principal components were then found for the training sets of each of these datasets three times: once unconstrained, once with (\ref{eqn:sdpf}) with only the mean constraints (and excluding the covariance constraints) with $\delta=0$, and once with (\ref{eqn:sdpf}) with both the mean and covariance constraints with $\delta=0$ and $\mu=0.01$; the test data was then projected onto these vectors. All data was normalized to have unit variance in each feature, which is common practice for datasets with features of incomparable units. For each instance, we estimated $\Delta(\mathcal{F})$ using the test set and for the families of linear SVM's $\mathcal{F}_v$ and Gaussian kernel SVM's $\mathcal{F}_k$. Finally, for each set of principal components $V$, the proportion of variance explained by the components was calculated as $\trace(V\widehat{\Sigma} V^{\textsf{T}}))/\trace(\widehat{\Sigma})$, where $\widehat{\Sigma}$ is the centered sample covariance matrix of training set $X$. \Cref{tab:results} displays all of these results averaged over 5 different training-testing splits.

	We may observe that our additional constraints are largely helpful in ensuring fairness by all definitions. Furthermore, in many cases, this increase in fairness comes at minimal loss in the explanatory power of the principal components. There are a few datasets for which (\ref{eqn:cc1}) appear superfluous. In general, gains in fairness are stronger with respect to $\mathcal{F}_v$; this is to be expected, as $\mathcal{F}_k$ is a highly sophisticated set, and thus more robust to linear projections. Kernel FPCA may be a better approach to tackling this issue, but we leave this for future work. Additional experiments and a comparison to the method of \citeauthor{calmon2017optimized} are shown in the appendix. We find that our method leads to more fairness on almost all datasets.
	
	\subsection{Hyperparameter sensitivity}
	\label{sec:sensitivity}
	
	Next, we consider the sensitivity of our results to hyperparameters $\delta,\mu$, for the Wine Quality dataset. The data was split into training (70\%) and testing (30\%) sets, and the top three fair principle components were found using (\ref{eqn:sdpf}) with only the mean constraint for each candidate $\delta$ and using (\ref{eqn:sdpf}) with both constraints for all combinations of candidate $\delta$ and $\mu$. All data was normalized to have unit variance in each independent feature. We calculated the percentage of the variance explained by the resulting principle components, and we estimated the fairness level $\Delta(\mathcal{F}_v)$ for the family of linear SVM's. This process was run 10 times for random data splits, and the averaged results are plotted in \Cref{fig:sensitivity}. Here, the solid red line represents (\ref{eqn:sdpf}) with only the mean constraint.  On the other hand, the dotted blue lines represent the (\ref{eqn:sdpf}) with both constraints, for the indicated $\mu$.
	
	Adding the covariance constraints and further tightening $\mu$ generally improves fairness and decreases the proportion of variance explained. However, observe that the relative sensitivity of fairness to $\delta$ is higher than that of the variance explained, at least for this dataset. Similarly, increasing $\mu$ decreases the portion of variance explained while resulting in a less discriminatory dataset after the dimensionality reduction. We note that increasing $\mu$ past a certain point does not provide much benefit, and so smaller values of $\mu$ are to be preferred. We found that increasing $\mu$ past 0.1 did not substantively change results further, so the largest $\mu$ that we consider is 0.1. In general, hyperparameters may be set with cross-validation, although (\ref{eqn:klbd}) may serve as guidance. 
	
	
	\subsection{Fair clustering of health data}
	\label{sec:nhanes}
		
	\footnotetext[1]{\cite{mansouri2013quantitative}}
	\footnotetext[2]{\cite{horton1996probabilistic}}
	\footnotetext[3]{\cite{tsanas2012accurate}}
	\footnotetext[4]{\cite{bock2004methods}}
	\footnotetext[5]{\cite{smith1988using}}
	\footnotetext[6]{\cite{angwin2016machine}}
	\footnotetext[7]{\cite{thompson2013video}}
	\footnotetext[8]{\cite{yeh2009comparisons}}
	\footnotetext[9]{\cite{cortez2009modeling}}
	
	Health insurance companies are considering the use of patterns of physical activity as measured by activity trackers in order to adjust health insurance rates of specific individuals \cite{sallis1998environmental,paluch2017leveraging}.  In fact, a recent clustering analysis found that different patterns of physical activity are correlated with different health outcomes \cite{fukuoka2018objectively}.  The objective of a health insurer in clustering activity data would be to find qualitative trends in an individual's physical activity that help categorize the risks that that customer portends. That is, individuals within these activity clusters are likely to incur similar levels of medical costs, and so it would be beneficial to engineer easy-to-spot features that can help insurers bucket customers. However, health insurance rates must satisfy a number of legal fairness considerations with respect to gender, race, and age.  This means that an insurance company may be found legally liable if the patterns used to adjust rates result in an unreasonably-negative impact on individuals of a specific gender, race, or age. Thus, an insurer may be interested in a feature engineering method to bucket customers while minimizing discrimination on protected attributes. Motivated by this, we use FPCA to perform a fair clustering of physical activity. \textit{Our goal is to find discernible qualitative trends in activity which are indicative of an individual's activity patterns, and thus health risks, but fair with respect to age.}
	
	We use minute-level data from the the National Health and Nutrition Examination Survey (NHANES) from 2005--2006 \cite{nhanes}, on the intensity levels of the physical activity of about 6000 women, measured over a week via an accelerometer.  In this example, we consider age to be our protected variable, specifically whether an individual is above or below 40 years of age.  We exclude weekends from our analysis, and average, over weekdays, the activity data by individual into 20-minute buckets. Thus, for each participant, we have data describing her average activity throughout an average day. We exclude individuals under 12 years of age, and those who display more than 16 hours of zero activity after averaging. The top 1\% most active participants, and corrupted data, were also excluded. Finally, data points corrupted or inexact due to accelerometer malfunctioning were excluded. This preprocessing mirrors that of \citeauthor{fukuoka2018objectively} and reflects practical concerns of insurers as well as the patchiness of accelerometer data.
	
	\begin{figure}
		\begin{center}
			\centerline{\includegraphics[width=\columnwidth]{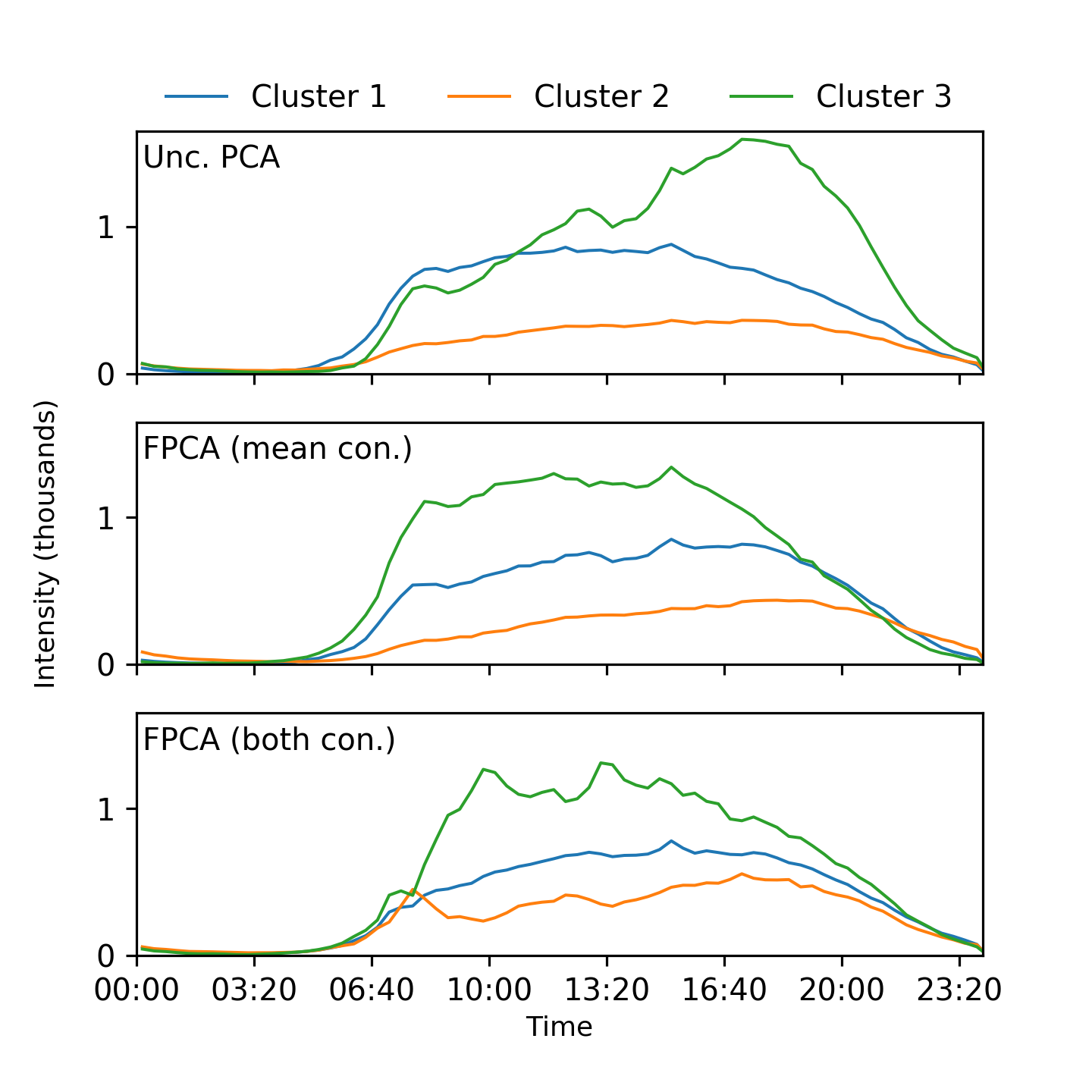}}
			\caption{The mean physical activity intensities, plotted throughout a day, of the clusters generated after dimensionality reduction through PCA, FPCA with the mean constraint, and FPCA with both constraints. In each plot, each line represents the average activity level of the members of one cluster.}
			\label{fig:nhanesAll}
		\end{center}
	\end{figure}
	
	PCA is sometimes used as a preprocessing step prior to clustering in order to expedite runtime. In this spirit, we find the top five principal components through PCA, FPCA with mean constraint, and FPCA with both constraints, with $\delta=0$ and $\mu=0.1$ throughout.  Then we conduct $k$-means clustering (with $k=3$) on the dimensionality-reduced data for each case. \Cref{fig:nhanesAll} displays the averaged physical activity patterns for the each of the clusters in each of the cases. Furthermore, \Cref{tab:clusterfairness} documents the proportion of each cluster comprised of examinees over 40. We note that the clusters found under an unconstrained PCA are most distinguishable after 3:00 PM, so an insurer interested in profiling an individual's risk would largely consider their activity in the evenings. However, we may observe in \Cref{tab:clusterfairness} that this approach results in notable age discrimination between buckets, opening the insurer to the risk of illegal price discrimination. On the hand, the second and third plots in \Cref{fig:nhanesAll} and columns in \Cref{tab:clusterfairness} suggest that clustering customers based on their activity during the workday, between 8:00 AM and 5:00 PM, would be less prone to discrimination.
	
	\begin{table}[t]
		\caption{The proportion of each cluster that are over 40 years of age. 36.05\% of all respondents are over 40. The final row displays the standard deviation of the numbers in the first three. The most fair solution would be the same age composition in all clusters, so this is a reasonable fairness metric.}
		\label{tab:clusterfairness}
		\begin{center}
			\begin{small}
				\begin{sc}
					\begin{tabular}{rcccr}
						\toprule
						& Unc. & Mean & Both \\
						\midrule
						\bf Cluster 1 & 43.18\% & 33.54\% & 35.61\% \\
						\bf Cluster 2 & 32.94\% & 38.64\% & 36.11\% \\
						\bf Cluster 3 & 8.71\% & 33.32\% & 37.28\% \\
						\midrule
						\bf Std. Dev & 14.87\% & 2.46\% & 1.79\% \\
						\bottomrule
					\end{tabular}
				\end{sc}
			\end{small}
		\end{center}
	\end{table}
	

	\section{Conclusion}
	\label{sec:conclusion}

	In this paper, we proposed a quantitative definition of fairness for dimensionality reduction, developed convex SDP formulations for fair PCA, and then demonstrated its effectiveness using several datasets.  Many avenues remain for future research on fair unsupervised learning.  For instance, we believe that our formulations in this paper may have suitable modifications that can be used to develop deflation and regression approaches for fair PCA analogous to those for sparse PCA \cite{AEJL:07,zou2006sparse}.

	\bibliography{fpca}
	\bibliographystyle{aaai}
	
	\appendix
	
	\section*{Appendix}
	
	\begin{table*}[!t]
		\caption{$\Delta$-fairness levels for the multivariate KS distance, for PCA, FPCA. and the method of \citeauthor{calmon2017optimized}. Best results for each fairness metric are bolded.}
		\label{tab:resultsKS}
		\begin{center}
			\begin{small}
				\begin{sc}
					\begin{tabular}{lccccr}
						\toprule
						Data Set & Unconstrained & FPCA - Mean & FPCA - Both & Calmon et al. \\
						\midrule
						Adult Income & 0.25 & 0.16 & \bf 0.07 & 0.25 \\
						Biodeg & 0.16 & \bf 0.15 & 0.17 & \bf 0.15 \\
						Ecoli & 0.64 & 0.29 & 0.32 & \bf 0.25 \\
						Energy & 0.16 & 0.12 & \bf 0.1 & 0.18\\
						German Credit & 0.17 & 0.16 & 0.16 & \bf 0.13 \\
						Image Seg & 0.19 & \bf 0.16 & 0.17 & 0.21 \\
						Letter Rec & 0.57 & \bf 0.09 & \bf 0.09 & 0.24 \\
						Magic & 0.14 & \bf 0.09 & 0.12 & 0.16 \\
						Pima Diabetes & 0.33 & 0.19 & \bf 0.18 & \bf 0.18 \\
						Recidivism & 0.20 & 0.09 & \bf 0.07 & 0.08\\
						SkillCraft & 0.12 & \bf 0.08 & \bf 0.08 & \bf 0.08 \\
						Statlog & 0.45 & 0.17 & \bf 0.12 & 0.18\\
						Steel & 0.48 & \bf 0.10 & \bf 0.10 & 0.58 \\
						Taiwanese Credit & 0.12 & \bf 0.07 & 0.08 & 0.13\\
						Wine Quality & 0.58 & 0.20 & \bf 0.07 & 0.44 \\
						\bottomrule
					\end{tabular}
				\end{sc}
			\end{small}
		\end{center}
	\end{table*}
	
	\subsection*{Parameters for FPCA}
	
	Here we present some additional experimental results. All results presented in this section are after averaged over 5 rounds of 70-30 training-testing splits, where an approach was trained on a random 70\% of the data and evaluated based on the specified metrics using the remaining 30\% of the data. In each case, the data was dimensionality-reduced using the top 5 principal components, fair or otherwise. All results follow after normalizing data columns, a practice that is common for datasets in which different features are of incomparable units. All results here use $\delta=0,\mu=0.01$.
	
	\begin{table*}[!b]
		\caption{Average squared distance from cluster center, as well as standard deviation of the proportion of each cluster that is of a certain protected class, for PCA, FPCA and the method of \citeauthor{calmon2017optimized}. Best fairness results for each dataset are bolded.}
		\label{tab:resultsCluster}
		\begin{center}
			\begin{small}
				\begin{sc}
					\begin{tabular}{l|cc|cc|cc|ccr}
						\toprule
						& \multicolumn{2}{c}{Unconstrained} & \multicolumn{2}{c}{FPCA - Mean} & \multicolumn{2}{c}{FPCA - Both} & \multicolumn{2}{c}{Calmon et al.} & \\
						Data Set & Score & Std. Dev & Score & Std. Dev & Score & Std. Dev & Score & Std. Dev \\
						\midrule
						Adult Income & 0.19 & 12.43 & 0.23 & 7.57 & 0.29 & \bf 2.28 & 0.05 & 11.32 \\
						Biodeg & 0.27 & 6.87 & 0.27 & 6.16 & 0.27 & \bf 5.34 & 0.16 & 5.49 \\
						Ecoli & 0.08 & 19.66 & 0.05 & 12.2 & 0.09 & \bf 10.69 & 0.18 & 11.78 \\
						Energy & 0.08 & 3.99 & 0.13 & 3.75 & 0.13 & \bf 3.57 & 0.10 & 5.02 \\
						German Credit & 0.25 & 6.4 & 0.25 & 4.82 & 0.28 & \bf 3.88 & 0.03 & 4.16 \\
						Image Seg & 0.10 & 8.46 & 0.09 & \bf 4.82 & 0.11 & 5.95 & 0.12 & 10.85 \\
						Letter Rec & 0.27 & 16.33 & 0.25 & 3.38 & 0.23 & \bf 3.28 & 0.37 & 8.65 \\
						Magic & 0.20 & 9.26 & 0.31 & \bf 5.15 & 0.35 & 5.42 & 0.18 & 8.77 \\
						Pima Diabetes & 0.24 & 9.09 & 0.27 & 6.36 & 0.26 & 5.96 & 0.28 & \bf 5.72 \\
						Recidivism & 0.26 & 7.6 & 0.17 & \bf 3.7 & 0.19 & 3.8 & 0.05 & 4.69 \\
						SkillCraft & 0.21 & 4.57 & 0.21 & \bf 2.27 & 0.24 & 2.88 & 0.38 & 3.21 \\
						Statlog & 0.09 & 21.99 & 0.23 & 16.06 & 0.31 & \bf 10.18 & 0.13 & 11.12 \\
						Steel & 0.16 & 18.49 & 0.19 & 9.85 & 0.24 & \bf 4.22 & 0.22 & 17.97 \\
						Taiwanese Credit & 0.17 & 3.85 & 0.24 & 2.99 & 0.29 & \bf 2.67 & 0.03 & 3.64 \\
						Wine Quality & 0.22 & 22.41 & 0.29 & 11.77 & 0.35 & \bf 2.11 & 0.34 & 11.70 \\
						\bottomrule
					\end{tabular}
				\end{sc}
			\end{small}
		\end{center}
	\end{table*}
	
	\subsection*{Benchmarks}
	
	To the best of our knowledge, there are very few methods that are directly comparable to ours. Most existing work is married to an explicit classification task, while ours is a general pre-processing step that makes it amenable to any type of analysis. Among the few comparable approaches are those of \citeauthor{zemel2013learning} \shortcite{zemel2013learning} and \citeauthor{calmon2017optimized}. Both design non-parametric optimization problems that yield a conditional distribution, $f_{\hat{X},\hat{Y}|X,Y,Z}$, which can then be used to transform data in a probabilistic way. We compare our method to that of \citeauthor{calmon2017optimized}, as their formulation is an extension of that of \citeauthor{zemel2013learning}.
	
	This method minimizes some pre-defined notion of overall deviation of $f_{\hat{X},\hat{Y}}$ from $f_{X,Y}$. In the original work, the authors choose to minimize $\frac{1}{2}\sum_{x,y}\left|f_{\hat{X},\hat{Y}}(x,y)-f_{X,Y}(x,y)\right|$. They subjects this to constraints on point-wise distortion $E_{\hat{X},\hat{Y}|X,Y}[\delta((X,Y),(\hat{X},\hat{Y})]$ for some function $\delta:\left\lbrace\mathbb{R}^p\times\{\pm1\}\right\rbrace^2\rightarrow\mathbb{R}_+$. It also bounds the dependency of the new main label $\hat{Y}$ on the original protected label, $J\left(f_{\hat{Y}|Z}(y|z),f_{Y}(y)\right)$, where they define $J$ to be the probability ratio measure:
	
	\begin{equation*}
	\label{eq:probratio}
	J(a,b) = \left|\frac{a}{b}-1\right|.
	\end{equation*}
	
	Thus, the final formulation is as follows:
	
	\begin{eqnarray*}\label{eq:calmon}
		\min&&\frac{1}{2}\sum_{x,y}\left|f_{\hat{X},\hat{Y}}(x,y)-f_{X,Y}(x,y)\right|\\
		\textrm{s.t.}&&E_{\hat{X},\hat{Y}|X,Y}[\delta((X,Y),(\hat{X},\hat{Y})|x,y]\le c,\forall x,y\label{eq:calmon1}\\
		&&\left|\frac{1}{f_{Y}(y)}f_{\hat{Y}|Z}(y|z)-1\right|\le d, \forall y,z\label{eq:calmon2}\\
		&&f_{\hat{X},\hat{Y}|X,Y,Z}\textrm{ are all distributions.}
	\end{eqnarray*}
	
	Following the authors, we approximate $f_{X,Y,Z}$ with the empirical distribution of the original data, separated into a pre-selected number of bins. Note that the resulting optimization problem will have $8(\#\textrm{bins})^{2p}$ parameters, and so can become computationally infeasible for high-dimensional datasets. To account for this, we follow the example of the original work and choose the 3 features most correlated with the main label, $y$. Each dimension is split into 8 bins. We choose $\delta((x',y'),(x,y))$ to be $0$ if $y=y'$ and $x=x'$, $0.5$ if $y=y'$ and $x,x$ vary by at most one in any dimension, and $1$ otherwise, which is similar to the $\delta$ chosen by the authors themselves. Finally, $c$ and $d$ were set at 0.1 and 0.3, respectively.
	
	\subsection*{Experiments}
	
	In \cref{tab:resultsKS}, we present fairness results using the family $\mathcal{F}_c$ of multivariate CDF's described in Section 3.3 of the main document (analogous to Kolmogorov-Smirnov distance). We run this for unconstrained PCA, FPCA with only the mean constraint, FPCA with both constraints, and the method of \citeauthor{calmon2017optimized}. We observe that our methods greatly improve fairness by this metric as well.
	
	\begin{table*}[!t]
		\caption{Comparison of accuracy and fairness on classification task using linear SVM. Results shown for linear SVM after dimensionality-reduction via PCA, FPCA with just the mean constraint and FPCA with both constraints, and are compared to the FSVM method of \citeauthor{olfat2017spectral} \shortcite{olfat2017spectral} (run with $\delta=0, \mu=0.1$ on non-dimensionality-reduced data) and the non-parametric method of \citeauthor{calmon2017optimized}. Best fairness results are bolded.}
		\label{tab:resultsSVM}
		\begin{center}
			\begin{small}
				\begin{sc}
					\begin{tabular}{l|cc|cc|cc|cc|ccr}
						\toprule
						& \multicolumn{2}{c}{FSVM (no PCA)} &
						\multicolumn{2}{c}{Unconstrained} & \multicolumn{2}{c}{FPCA - Mean} & \multicolumn{2}{c}{FPCA - Both} & \multicolumn{2}{c}{Calmon et al.} & \\
						Data Set & AUC & $\Delta$ & AUC & $\Delta$ & AUC & $\Delta$ & AUC & $\Delta$ & AUC & $\Delta$ \\
						\midrule
						Adult Income & 0.86 & 0.13 & 0.66 & 0.17 & 0.69 & \bf 0.07 & 0.57 & 0.08 & 0.51 & 0.23 \\
						Biodeg & 0.85 & 0.12 & 0.82 & 0.20 & 0.81 & 0.13 & 0.79 & \bf 0.11 & 0.60 & 0.14 \\
						Ecoli & 0.74 & \bf 0.17 & 0.84 & 0.50 & 0.69 & 0.23 & 0.72 & 0.29 & 0.63 & 0.30 \\
						Energy & 0.55 & 0.09 & 0.51 & 0.09 & 0.56 & 0.08 & 0.55 & \bf 0.07 & 0.54 & 0.13 \\
						German Credit & 0.76 & 0.11 & 0.62 & 0.11 & 0.57 & \bf 0.10 & 0.58 & 0.14 & 0.63 & 0.11 \\
						Image Seg & 0.99 & 0.19 & 0.99 & 0.16 & 0.99 & 0.19 & 0.98 & \bf 0.15 & 0.79 & 0.20 \\
						Letter Rec & 0.72 & \bf 0.07 & 0.58 & 0.60 & 0.50 & 0.09 & 0.49 & 0.10 & 0.65 & 0.19 \\
						Magic & 0.83 & 0.13 & 0.74 & 0.14 & 0.82 & 0.13 & 0.72 & \bf 0.12 & 0.65 & 0.13 \\
						Pima Diabetes & 0.80 & 0.14 & 0.75 & 0.21 & 0.73 & \bf 0.11 & 0.76 & 0.15 & 0.54 & 0.15 \\
						Recidivism & 0.54 & 0.08 & 0.69 & 0.24 & 0.54 & \bf 0.06 & 0.52 & 0.07 & 0.55 & 0.08 \\
						SkillCraft & 0.82 & 0.06 & 0.85 & 0.10 & 0.82 & \bf 0.05 & 0.80 & \bf 0.05 & 0.62 & 0.07 \\
						Statlog & 0.99 & 0.31 & 1.00 & 0.33 & 0.99 & 0.33 & 0.85 & 0.18 & 0.67 & \bf 0.16 \\
						Steel & 0.73 & 0.15 & 0.53 & 0.37 & 0.62 & 0.19 & 0.61 & \bf 0.12 & 0.55 & 0.15 \\
						Taiwanese Credit & 0.73 & \bf 0.07 & 0.60 & 0.11 & 0.60 & 0.09 & 0.64 & \bf 0.07 & 0.75 & \bf 0.07 \\
						Wine Quality & 0.78 & 0.10 & 0.69 & 0.75 & 0.69 & 0.19 & 0.67 & \bf 0.05 & 0.66 & 0.09 \\
						\bottomrule
					\end{tabular}
				\end{sc}
			\end{small}
		\end{center}
	\end{table*}
	
	In \cref{tab:resultsCluster}, we present statistics for clustering done transformed data. Again, the methods used to transofrm the data are PCA, FPCA with only the mean constraint, FPCA with both constraints, and the method of \citeauthor{calmon2017optimized}. Reducing dimensionality prior to clustering is a common technique used to avoid the curse-of-dimensionality that arises in many unsupervised methods \cite{kumar2010clustering,aggarwal2001surprising}, so this is a relevant metric of comparison. For each case, we display the average squared distance from the closest cluster as a measure of accuracy, and the standard deviation of the proportion of each cluster that is of a certain protected class (the same metric reported in Section of 6.4 of the main document). That is, we consider the proportion of each cluster that is of protected class $z=+1$ (in percentage points), and return the standard deviation of these figures (so the units would also be percentage points for these columns). In a given clustering, it is intuitive that the most fair outcome would be for every cluster to have the same composition in terms of protected classes (thus standard deviation of zero as mentioned above), so we maintain that this is a reasonable proxy for fairness. We observe that our method greatly reduces the unfairness within clusters, while not significantly decreasing the value of the clustering compared to a typical clustering. In some cases, we note that our method does even better in terms of accuracy; this may arise due to the fact that we are evaluating based on testing error as opposed to training error (i.e. we find cluster centers on training data and then find the closest cluster center for each point in the testing set). This suggests that our method may even act to aid in reducing generalization error.
	
	Finally, we present an analysis of our method as a preprocessing step for classification in \cref{tab:resultsSVM}. Here, we define a classification task on the datasets, and show the performance of linear SVM after dimensionality reduction via PCA, FPCA with the mean constraint and FPCA with both constraints. We compare these  all with the method of \citeauthor{calmon2017optimized}, as before, but we also compare to the Fair SVM (FSVM) method of \cite{olfat2017spectral} (run with hyperparameters $\delta=0,\mu=0.1$ on non-dimensionality-reduced data), which was specifically designed for such a task. We compared the datasets based on fairness, as well as Area Under the Curve (AUC), which is measured as the area under the ROC curve of a classifier that takes a threshold as an input. We note that our method often produces more fair results. In some cases, our method matches or even beats the accuracy of FSVM. It is of importance that our method is a flexible method, while FSVM is specifically tailored to margin classifiers. Thus, it is to be expected that our method would not be strictly better in terms of accuracy. However, the comparison with regards to fairness is often quite favorable for our method.
	
\end{document}